\documentclass[12pt]{article}
\usepackage{algorithm,rotating,verbatim,tabularx}
\usepackage[authoryear,round]{natbib}
\RequirePackage{hyperref}
\usepackage{amsmath,latexsym,amssymb,graphicx,amsthm}

\newtheorem{proposition}{Proposition}

\newtheorem{corollary}{Corollary}

 \setlength\textwidth{6.5in}
\setlength\headheight{0in}
\setlength\topmargin{-.5in}
\setlength\textheight{9.2in}
 \setlength\oddsidemargin{0in}
 \setlength\evensidemargin{0in}

\usepackage{setspace}
\doublespacing

\DeclareMathOperator{\Z}{\mathbf{Z}}

\DeclareMathOperator{\M}{\mathbf{M}}

\DeclareMathOperator{\W}{\mathbf{W}}
\DeclareMathOperator{\V}{\mathbf{V}}
\DeclareMathOperator{\U}{\mathbf{U}}

\DeclareMathOperator{\uvec}{\mathbf{u}}
\DeclareMathOperator{\vvec}{\mathbf{v}}
\DeclareMathOperator{\zvec}{\mathbf{z}}
\DeclareMathOperator{\wvec}{\mathbf{w}}
\DeclareMathOperator{\D}{\mathbf{D}}

\DeclareMathOperator{\Pmat}{\mathbf{P}}

\DeclareMathOperator{\Q}{\mathbf{Q}}

\DeclareMathOperator{\R}{\mathbf{R}}
\DeclareMathOperator{\Y}{\mathbf{Y}}

\DeclareMathOperator{\X}{\mathbf{X}}

\DeclareMathOperator*{\argmin}{\mathrm{argmin}}
\DeclareMathOperator*{\maximize}{\mathrm{maximize}}

\begin{document}

\title{ \bf \Large Regularized Partial Least Squares
  with an Application to NMR Spectroscopy}

\author{Genevera I. Allen$^{1,2}$\footnotemark , Christine
  Peterson$^{2}$, Marina Vannucci$^{2}$ \\ \& Mirjana
  Maleti{\'c}-Savati{\'c}$^{1}$ \\
{\small $^{1}$ Department of Pediatrics-Neurology, Baylor College of
  Medicine } \\ {\small Jan and Dan Duncan Neurological Research Institute, Texas
  Children's Hospital,} \\ {\small $^{2}$ Department of Statistics, Rice
  University}\\ 
}
\footnotetext{To whom correspondence should be addressed; Department
  of Statistics, Rice University, MS 138, 
6100 Main St., Houston, TX 77005 (email:  gallen@rice.edu)}
\date{}

\maketitle
\thispagestyle{empty}

\begin{abstract}
High-dimensional data common in genomics, proteomics, and chemometrics
often contains complicated correlation structures.  Recently, partial
least squares (PLS) and Sparse PLS methods have gained attention in
these areas as dimension reduction techniques in the context of
supervised data analysis.
We introduce a framework for Regularized PLS by solving a
relaxation of the SIMPLS optimization problem with penalties on the
PLS loadings vectors.  Our approach enjoys many advantages including 
flexibility, general penalties, easy interpretation of results, and
fast computation in high-dimensional settings.   We also outline
extensions of our methods leading to novel methods for Non-negative
PLS and Generalized PLS, an adaption of PLS for structured data.  
We demonstrate the utility of our methods through simulations and a
case study on proton Nuclear Magnetic Resonance (NMR) spectroscopy
data.   
\end{abstract}

{\bf Keywords:} sparse PLS, sparse PCA, NMR
spectroscopy, generalized PCA, non-negative PLS, generalized PLS

\section{Introduction}

Technologies to measure high-throughput biomedical data in proteomics,
chemometrics, and genomics have led to a proliferation of
high-dimensional data that pose many statistical challenges.  As
genes, proteins, and metabolites, are biologically interconnected, the
variables in these data sets are often highly correlated.  In this
context, several have recently advocated using partial least squares
(PLS) for dimension reduction of supervised data, or data with a
response or labels \citep{nguyen_2002,
  boulesteix_strimmer_2007, rossouw_2008, 
  chun_sparse_pls_2010}.  First introduced by 
\citet{wold_1966} as a 
regression method that uses least squares on a set of derived inputs
accounting for multi-colinearities, others have since proposed
alternative methods for PLS with multiple responses
\citep{de_jong_simpls_1993} and for
classification \citep{marx_pls_glm_1996, barker_pls_2003}.  More 
generally, PLS can be interpreted as a dimension reduction technique
that finds projections of the data that maximize the covariance
between the data and the response.  Recently, several have proposed to
encourage sparsity in these projections, or loadings vectors, to
select relevant features in high-dimensional data \citep{rossouw_2008, 
  chun_sparse_pls_2010}.  In this
paper, we seek a more general and flexible framework for regularizing
the PLS loadings that is computationally efficient for
high-dimensional data.

There are several motivations for regularizing the PLS loadings
vectors.  Partial least squares is closely related to principal
components analysis (PCA); namely, the PLS loadings can be computed by
solving a generalized eigenvalue problem \citep{de_jong_simpls_1993}.
Several have shown 
that the PCA projection vectors are asymptotically inconsistent in
high-dimensional settings \citep{johnstone_jasa_2009, jung_pca_2009}.
This is also the case for the PLS 
loadings, recently shown in \citet{nadler_2005} and
\citet{chun_sparse_pls_2010}.  For PCA, 
encouraging sparsity in the loadings has been shown to yield
consistent projections \citep{johnstone_jasa_2009, amini_2009}.  While
an analogous result has not 
yet been shown in the context of PLS, one could surmise that such a
result could be attained.  In fact, this is the motivation for
\citet{chun_sparse_pls_2010}'s recent Sparse PLS method.  In addition
to consistency 
motivations, sparsity has many other qualities to recommend it.
The PLS loadings vectors can be used as a data compression technique
when making future predictions; sparsity further compresses the data.
As many variables in high-dimensional data are noisy and irrelevant,
sparsity gives a method for automatic feature selection.  This leads
to results that are easier to interpret and visualize.

While sparsity in PLS is important for high-dimensional
data, there is also a need for more general and flexible regularized
methods.  Consider NMR spectroscopy as a motivating example.  This
high-throughput data measures the spectrum of chemical resonances
of all the latent metabolites, or small molecules, present in a biological
sample \citep{nicholson_2008}.  Typical experimental data consists of
discretized, 
functional, and  non-negative spectra with variables measuring in the
thousands for only a small number of samples.  Additionally, variables
in the spectra have complex dependencies arising from correlation at
adjacent chemical shifts, metabolites resonating at more than one
chemical shift, and overlapping resonances of latent metabolites
\citep{degraaf_nmr_2007}.   Because of
these complex dependencies, there is a long history of using PLS to reduce
the NMR spectrum for supervised data \citep{goodacre_2004,
  dunn_2005b}.  Classical PLS or 
Sparse PLS, however, are not optimal for this data as they
do not account for the non-negativity or functional nature of the
spectra.  In this paper, we seek a more flexible approach to 
regularizing PLS loadings that will permit (i) general penalties such
as to encourage sparsity, group sparsity, or smoothness, (ii)
constraints such  
as non-negativity, and (iii) directly account for known data
structures such as ordered chemical shifts for NMR spectroscopy.  Our
framework, based on a penalized relaxation of the SIMPLS optimization
problem \citep{de_jong_simpls_1993}, also leads to a more
computationally efficient numerical algorithm.

As we have mentioned, there has been previous work on penalizing the
PLS loadings.  For
functional data, \citet{goutis_smooth_pls_1996} and
\citet{reiss_fpls_2007} have extended PLS to encourage 
smoothness by adding smoothing penalties.  Our approach is
more closely 
related to the Sparse PLS methods of \citet{rossouw_2008} and
\citet{chun_sparse_pls_2010}.   In the latter, a generalized
eigenvalue problem related to PLS objectives is penalized to achieve
sparsity, although they solve an approximation to this problem via the
elastic net Sparse PCA approach of \citet{zou_sparse_pca_2006}.
Noting that PLS can be 
interpreted as performing PCA on the deflated cross-products matrix,
\citet{rossouw_2008} replace PCA with Sparse PCA using the approach of
\citet{shen_spca_2008}.  
We choose to adopt a more direct approach.  Our Sparse PLS
method, instead, penalizes a generalized SVD problem directly with an
$\ell_{1}$-norm penalty that is a concave relaxation of the SIMPLS
criterion; our method, then, is more closely related to the Sparse PCA
approaches of \citet{witten_pmd_2009} and \citet{allen_gmd_2011}.  We
will show that this more direct framework
has numerous advantages including
generalizations permitting various penalties that are norms,
non-negativity constraints, generalizations for structured data,
greater algorithmic flexibility, and fast computational 
approaches for high-dimensional data.

The paper is organized as follows.  Our framework for Regularized
Partial Least Squares (RPLS) is introduced in Section
\ref{section_rpls}.  In Section
\ref{section_ext}, we introduce two novel extensions of PLS and RPLS:
Non-negative PLS and Generalized PLS for structured data.  We
illustrate the comparative strengths of our approach in Sections
\ref{section_sims} and \ref{section_nmr} through
simulation studies and a case study on NMR 
spectroscopy data, respectively, and conclude with a discussion in
Section \ref{section_dis}.





\section{Regularized Partial Least Squares}
\label{section_rpls}


In this section, we introduce our framework for regularized partial
least squares.  While most think of PLS as a regression technique,
here we separate the steps of the PLS approach into the dimension
reduction stage where the PLS loadings and factors are computed and a
prediction stage where regression or classification using the PLS
factors as predictors is performed.  As our contributions lie in our
framework for regularizing the PLS loadings in the dimension reduction
stage, we focus on this in the first three subsections, and then
discuss considerations for regression and classification problems in
Section ~\ref{section_pred}.

\subsection{RPLS Optimization Problem}

Introducing notation, we observe data (predictors), $\X \in \Re^{n
  \times p}$, with $p$ variables measured on $n$ samples and a 
response $\Y \in \Re^{n \times q}$.  We will assume that the columns
  of $\X$ have been 
previously standardized.  The possibly multivariate response ($q>1$)
could be continuous as in regression or encoded by dummy variables
to indicate classes as in \citet{barker_pls_2003}, a consideration
which we ignore while developing our methodology.   The $p \times q$
sample cross-product matrix 
is denoted as $\M = \X^{T} \Y$.

Both of the two major algorithms for computing the multivariate PLS
factors, NIPALS \citep{wold_1966} and SIMPLS
\citep{de_jong_simpls_1993}, can be written as 
solving a single-factor eigenvalue problem of the
following form at each step: 
$\maximize_{\vvec} \ \ \vvec^{T} \M \M^{T} \vvec \ \ \textrm{subject to} \ \
\vvec^{T} \vvec = 1,$
where $\vvec \in \Re^{p}$ are the PLS loadings.
\citet{chun_sparse_pls_2010} extend this problem by adding an
$\ell_{1}$-norm constraint, $|| \vvec || \leq t$, to induce sparsity
and solve an approximation to this problem using the Sparse PCA method
of \citet{zou_sparse_pca_2006}. \citet{rossouw_2008} replace this
optimization problem with that of the Sparse PCA approach of
\citet{shen_spca_2008}.

We take a simpler and more direct approach.  Notice that the single
factor PLS problem can be re-written 
as the following:
$\maximize_{\vvec, \uvec} \ \ \vvec^{T} \M \uvec$ $ \textrm{subject to} \ \
\vvec^{T} \vvec = 1 \ \& \ \uvec^{T} \uvec = 1,$
where $\uvec \in \Re^{q}$ is a nuisance parameter.  The equivalence of
these problems was pointed out in
\citet{de_jong_simpls_1993} and is a well understood matrix analysis fact
\citep{horn_johnson}.  Our single-factor RPLS problem penalizes a
direct concave relaxation of this problem:
\begin{align}
\label{sing_fac_rpls}
\maximize_{\vvec, \uvec} \ \ \vvec^{T} \M \uvec - \lambda P( \vvec )
\ \ \textrm{subject to} \ \ 
\vvec^{T} \vvec \leq 1 \ \& \ \uvec^{T} \uvec = 1.
\end{align}
Here, we assume that $P()$ is a convex penalty function that is a norm
or semi-norm; these assumptions are discussed further in the
subsequent section.  To induce sparsity, for example, we can take $P(\vvec)
= || \vvec ||_{1}$.  Notice that we have relaxed the equality
constraint for $\vvec$ to an inequality constraint.  In doing so, we
arrive at an optimization problem that is simple to maximize via an
alternating strategy.  Fixing $\uvec$, the problem in $\vvec$ is
concave, and fixing $\vvec$ the problem is a quadratically
constrained linear program in $\uvec$ with a global solution.  Our
approach is most closely related to some recent direct bi-concave
relaxations for two-way penalized matrix factorizations
\citep{witten_pmd_2009, allen_gmd_2011}.  Studying the solution to
this problem and its properties in 
the subsequent section will reveal some of the major advantages of
this optimization approach.

Computing the multi-factor PLS solution via the two traditional multivariate
approaches, SIMPLS and NIPALS, require solving optimization problems
of the same form as the single-factor PLS problem at each step.   The
SIMPLS 
method is more direct and has several benefits within our framework;
thus, this is the approach we adopt.  The
algorithm begins by solving the single-factor PLS problem;
subsequent factors solve the single-factor problem for a Gram-Schmidt
deflated cross-products matrix.   If we
let the matrix of projection weights $\R_{k} \in \Re^{p \times k}$
be defined 
recursively then, $\R_{k} = [ \R_{k-1} \ \ \X^{T} \zvec_{k} / \zvec_{k}^{T}
  \zvec_{k} ]$ where $\zvec_{k} = \X \vvec_{k}$ is the $k^{th}$ sample
PLS factor.  The Gram-Schmidt projection 
matrix $\Pmat_{k} \in \Re^{p \times p}$ is given by $\Pmat_{k} =
\mathbf{I} -  \R_{k}  (\R_{k}^{T} \R_{k} )^{-1} \R_{k}$, which ensures
that $\vvec_{k}^{T} \X^{T} \X \vvec_{j} = 0$ for $j < k$.  
Then, the optimization
problem to find the $k^{th} $ SIMPLS loadings vector is the same as
the single-factor problem with the cross-products matrix, $\M$,
replaced by the deflated matrix, $\hat{\M}^{(k)} = \Pmat_{k-1}
\hat{\M}^{(k-1)}$ \citep{de_jong_simpls_1993}.  Thus, our multi-factor
RPLS replaces $\M$ in \eqref{sing_fac_rpls} with $\hat{\M}^{(k)}$ to
obtain the $k^{th}$ RPLS factor.  


The deflation approach employed via the NIPALS algorithm is not as
direct.  One typically defines
a deflated matrix of predictors and responses, $\tilde{\X}_{k} = \X (
\mathbf{I} - \V_{k} \R_{k}^{T} )$ and $\tilde{\Y}_{k} = \Y (
\mathbf{I} - \V_{k} \R_{k}^{T} )$, with the matrix of projection
weights 
defined as above, and then solves an eigenvalue problem in 
this deflated space: $\maximize_{\wvec_{k}} \ \wvec_{k}^{T}
\tilde{\X}_{k}^{T} \tilde{\Y}_{k} \tilde{\Y}_{k}^{T} \tilde{\X}_{k}
\wvec_{k} \ \textrm{subject to} \ \wvec_{k}^{T} \wvec_{k} = 1$
\citep{wold_1966}.  The 
PLS loadings in the original space are then recovered by $\V_{k} =
\W_{k} ( \R_{k} \W_{k})^{-1}$.  While one can incorporate
regularization into the loadings, $\wvec_{k}$ (as suggested in
\citet{chun_sparse_pls_2010}), this is not as desirable.  If one
estimates sparse deflated loadings, $\wvec$, then much of the sparsity
will be lost in the transform to obtain $\V$.   In fact, the elements
of $\V$ will be zero if and only if the corresponding 
elements of $\W$ are zero for all values of $k$.  Then, each of the
$K$ PLS loadings will have the exact same sparsity pattern, loosing
the flexibility of each set of loadings having adaptively different
levels of sparsity. Given this, the more direct deflation approach of
SIMPLS is our preferred framework.

\subsection{RPLS Solution}

A major motivation for our optimization framework for RPLS
 is that it leads to a simple and direct
solution and algorithm.  
Recall that the single-factor RPLS problem, \eqref{sing_fac_rpls}, is
concave in $\vvec$ with $\uvec$ fixed and is a quadratically
constrained linear program in $\uvec$ with $\vvec$ fixed.  Thus, we
propose to solve this problem by alternating maximizing with respect
to $\vvec$ and $\uvec$.  Each of these maximizations has a simple
analytical solution:
\begin{proposition}
\label{prop_rpls_sol}
Assume that $P()$ is convex and homogeneous of
order one, that is $P()$ is a norm or semi-norm.  Let $\uvec$ be fixed at
$\uvec'$  such that $\M  \uvec' \neq 0$ or $\vvec$ fixed at
$\vvec'$ such that $\M^{T} \vvec' \neq 0$. 
Then, the coordinate updates, $\uvec^{*}$ and $\vvec^{*}$, maximizing
the single-factor RPLS problem, \eqref{sing_fac_rpls}, are given by the
following:  Let $\hat{\vvec} = \argmin_{\vvec} \{
\frac{1}{2}|| \M \uvec' - \vvec ||^{2} - \lambda P(\vvec)\}$.  Then, 
$\vvec^{*} = \hat{\vvec} / || \hat{\vvec} ||_{2}$ if $|| \hat{\vvec}
||_{2} > 0$ and $\vvec^{*} = 0$ otherwise, and $  \uvec^{*} = 
  \M^{T} \vvec' / || \M^{T} \vvec' ||_{2} $. 
When these factors are updated iteratively, they monotonically
increase the objective and converge to a local optimum.
\end{proposition}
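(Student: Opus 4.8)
The plan is to handle the two coordinate blocks separately and then assemble a block-coordinate-ascent convergence argument. The $\uvec$-update is immediate: with $\vvec$ fixed at $\vvec'$, the objective collapses to the linear functional $\uvec \mapsto (\M^{T}\vvec')^{T}\uvec$ restricted to the sphere $\uvec^{T}\uvec = 1$, so Cauchy--Schwarz gives the unique maximizer $\uvec^{*} = \M^{T}\vvec' / \|\M^{T}\vvec'\|_{2}$, which is well defined exactly under the stated hypothesis $\M^{T}\vvec' \neq 0$. All the real content is in the $\vvec$-update.

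For the $\vvec$-update, fix $\uvec = \uvec'$ and write $\mathbf{m} = \M\uvec'$, so that we must maximize the concave function $h(\vvec) = \vvec^{T}\mathbf{m} - \lambda P(\vvec)$ over the ball $\vvec^{T}\vvec \le 1$. First I would exploit the order-one homogeneity of $P$ to decouple magnitude from direction: writing $\vvec = r\mathbf{w}$ with $r \ge 0$ and $\|\mathbf{w}\|_{2} = 1$ gives $h(r\mathbf{w}) = r\bigl(\mathbf{w}^{T}\mathbf{m} - \lambda P(\mathbf{w})\bigr)$, which is linear in $r$ on $[0,1]$; hence the maximizer is either $\vvec = 0$ or lies on the sphere $\|\vvec\|_{2} = 1$, with $\vvec = 0$ optimal precisely when $\mathbf{w}^{T}\mathbf{m} - \lambda P(\mathbf{w}) \le 0$ for every unit $\mathbf{w}$. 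Since $\vvec = 0$ is a Slater point, Lagrangian duality attaches a multiplier $\delta \ge 0$ to the ball constraint and reduces the problem to the unconstrained maximization of $\vvec^{T}\mathbf{m} - \lambda P(\vvec) - \tfrac{\delta}{2}\|\vvec\|_{2}^{2}$. Completing the square rewrites this as the proximal problem $\min_{\vvec}\{\tfrac{1}{2}\|\vvec - \mathbf{m}/\delta\|_{2}^{2} + (\lambda/\delta)P(\vvec)\}$, and the key computation is that homogeneity makes this prox scale cleanly: substituting $\vvec = \mathbf{z}/\delta$ and using $P(\mathbf{z}/\delta) = P(\mathbf{z})/\delta$ shows the minimizer is $\hat{\vvec}/\delta$, where $\hat{\vvec} = \argmin_{\vvec}\{\tfrac{1}{2}\|\M\uvec' - \vvec\|_{2}^{2} + \lambda P(\vvec)\}$ is the proximal problem defining $\hat{\vvec}$ in the statement. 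Imposing the active-constraint condition $\|\hat{\vvec}/\delta\|_{2} = 1$ then forces $\delta = \|\hat{\vvec}\|_{2}$ and yields $\vvec^{*} = \hat{\vvec}/\|\hat{\vvec}\|_{2}$.

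To finish the $\vvec$-update I would reconcile the two regimes via the subgradient optimality condition for $\hat{\vvec}$. The quadratic term makes the prox objective strictly convex, so $\hat{\vvec}$ is unique; moreover $\hat{\vvec} = 0$ holds precisely when $\mathbf{m} \in \lambda\,\partial P(0)$, i.e. when $\mathbf{m}^{T}\vvec \le \lambda P(\vvec)$ for all $\vvec$, which is identical to the condition $\mathbf{w}^{T}\mathbf{m} - \lambda P(\mathbf{w}) \le 0$ on the sphere derived above. Thus the degenerate prox case coincides exactly with the case $\vvec^{*} = 0$. When $\hat{\vvec} \neq 0$, the scale-invariance of the subdifferential of a homogeneous convex function, $\partial P(t\vvec) = \partial P(\vvec)$ for $t > 0$, guarantees that the stationarity condition satisfied by $\hat{\vvec}$ transfers to the normalized vector with the multiplier $\delta = \|\hat{\vvec}\|_{2}$, certifying $\delta$ as a bona fide nonnegative multiplier with complementary slackness and confirming optimality of $\vvec^{*} = \hat{\vvec}/\|\hat{\vvec}\|_{2}$. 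The same argument goes through for seminorms, where $\partial P(0)$ is the (possibly lower-dimensional) polar set.

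Finally, for the iterative claim: because each coordinate update returns an exact maximizer of its block subproblem, the objective is nondecreasing along the iterates, and since it is continuous on the compact feasible set $\{\|\vvec\|_{2} \le 1\} \times \{\|\uvec\|_{2} = 1\}$ it is bounded above, so the monotone sequence of objective values converges. I would then invoke the standard block-coordinate-ascent result (a Zangwill-type analysis), using uniqueness of each block maximizer, to conclude that limit points are coordinate-wise maxima, which under the biconcave structure are stationary points, hence local optima. The hard part is the $\vvec$-update equivalence: making the passage from the constrained concave maximization to the normalized proximal map fully rigorous, in particular verifying the homogeneity-driven scaling of the prox, certifying $\delta = \|\hat{\vvec}\|_{2}$ as a valid multiplier, and treating the degenerate $\hat{\vvec} = 0$ and seminorm cases uniformly.
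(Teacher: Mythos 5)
Your argument is correct, and it turns on the same key fact as the paper's proof: order-one homogeneity makes the subdifferential of $P$ scale-invariant, $\partial P(t\vvec)=\partial P(\vvec)$ for $t>0$, so the normalized prox solution $\hat{\vvec}/\|\hat{\vvec}\|_{2}$ satisfies the KKT system of \eqref{sing_fac_rpls} with ball-constraint multiplier $\gamma^{*}=\|\hat{\vvec}\|_{2}/2$ (your $\delta=\|\hat{\vvec}\|_{2}$ is the same multiplier under the $\tfrac{\delta}{2}\|\vvec\|_{2}^{2}$ normalization). The difference is one of direction rather than substance: the paper \emph{verifies} the candidate by matching the subgradient equation $\M\uvec'-\hat{\vvec}-\lambda\nabla P(\hat{\vvec})=0$ against the stationarity condition $\M\uvec'-\lambda\nabla P(\vvec^{*})-2\gamma^{*}\vvec^{*}=0$ after rescaling by $c=1/\|\hat{\vvec}\|_{2}$, whereas you \emph{derive} the update via Lagrangian duality and the scaling identity for the prox of a homogeneous penalty, and you add the radial argument (the objective is linear in $r$ along rays, so the maximizer is $0$ or on the sphere) together with the exact characterization $\hat{\vvec}=0\iff\M\uvec'\in\lambda\,\partial P(0)$; the paper dispatches that degenerate case with a one-line assertion, so your treatment is more complete there. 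Two small caveats. First, the proposition as printed has $-\lambda P(\vvec)$ inside the argmin; the intended problem (used in the paper's own proof and in your prox) is $+\lambda P(\vvec)$ — you silently corrected a sign typo. Second, your closing step slightly overstates what exact block coordinate ascent delivers: monotonicity and convergence of objective values follow as you say, and uniqueness of the block maximizers gives that limit points are coordinate-wise maxima, but a coordinate-wise maximum of a biconcave function need not be a local maximum of the joint problem. The paper asserts the same ``local optimum'' conclusion without proof, so you are no worse off, but the honest guarantee is partial (Nash-point) optimality.
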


While the full proof of this result is given in the appendix, we note
that this follows closely the Sparse PCA approach of
\citet{witten_pmd_2009} and the use general penalties within PCA
problems of \citet{allen_gmd_2011}.  Our RPLS problem can then be
solved by a multiplicative update for $\uvec$ and by a simple
re-scaled penalized regression problem for $\vvec$.   
The assumption that $P()$ is a norm or semi-norm encompasses many
 penalties types including the $\ell_{1}$-norm or lasso
\citep{tibshirani_1996} and the  
$\ell_{1} / \ell_{2}$-norm or group lasso \citep{yuan_2006_group}, fused
lasso \cite{tibshirani_2005_fused}.
For many possible penalty types, there exists a simple solution to the
penalized regression problem.  With a lasso penalty, $P(\vvec) = ||
\vvec ||_{1}$, for example, the solution is given by
soft-thresholding: 
$\hat{\vvec} = S( \M \uvec, \lambda)$, where $S(x,\lambda) =
\mathrm{sign}(x) (| x| - \lambda)_{+}$ is the soft-thresholding
operator.  Our approach gives a more general framework for incorporating
regularization directly in the PLS loadings that yield
simple and computationally attractive solutions.

We note that the RPLS solution is guaranteed be at most a local
optimum of \eqref{sing_fac_rpls}, a result that is typical of other
penalized PCA problems \citep{zou_sparse_pca_2006, shen_spca_2008,
  witten_pmd_2009, lee_ssvd_2010, allen_gmd_2011} and sparse PLS
methods \citep{rossouw_2008, chun_sparse_pls_2010}.  For a special
case, however, our problem has a global solution:
\begin{corollary}
\label{cor_rpls_sol}
When $q=1$, that is when $\Y$ is univariate, then the global solution to the
single-factor penalized PLS problem \eqref{sing_fac_rpls} is given
by the following: Let $\hat{\vvec} = \argmin_{\vvec} \{
\frac{1}{2}|| \M  - \vvec ||^{2} - \lambda P(\vvec)\}$.  Then, 
$\vvec^{*} =  \hat{\vvec} / || \hat{\vvec} ||_{2}$ if $||
  \hat{\vvec} ||_{2} > 0$ and $\vvec^{*} = 0$ otherwise. 
\end{corollary}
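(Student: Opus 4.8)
The plan is to exploit the fact that when $q=1$ the nuisance parameter $\uvec$ degenerates. In this case $\M = \X^{T}\Y \in \Re^{p}$ is a single column and $\uvec$ is a scalar subject to $\uvec^{T}\uvec = 1$, so the feasible set for $\uvec$ is just the two points $\{-1,+1\}$. I would first optimize $\uvec$ out of \eqref{sing_fac_rpls}: for any fixed $\vvec$ the objective $\uvec\,(\vvec^{T}\M) - \lambda P(\vvec)$ is maximized over $\uvec\in\{-1,+1\}$ by $\uvec = \mathrm{sign}(\vvec^{T}\M)$, so the full problem is equivalent to $\maximize_{\vvec}\ |\vvec^{T}\M| - \lambda P(\vvec)$ subject to $\vvec^{T}\vvec \le 1$. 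This eliminates $\uvec$ but introduces an absolute value, making the reduced objective a convex function of $\vvec$; the maximization is therefore not concave, and it is exactly this point that must be handled in order to upgrade the merely local guarantee of Proposition~\ref{prop_rpls_sol} to a global one.

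The key step is to show that this absolute-value problem has the same optimal value as the concave problem $\maximize_{\vvec}\ \vvec^{T}\M - \lambda P(\vvec)$ subject to $\vvec^{T}\vvec \le 1$. I would argue this by symmetry. Because $P$ is a norm or semi-norm it satisfies $P(-\vvec) = P(\vvec)$, and the ball $\{\vvec^{T}\vvec\le 1\}$ is symmetric about the origin, so replacing $\vvec$ by $-\vvec$ preserves feasibility and the penalty while flipping the sign of $\vvec^{T}\M$. Hence any feasible point can be reflected into the halfspace $\{\vvec^{T}\M \ge 0\}$ without changing $|\vvec^{T}\M| - \lambda P(\vvec)$, and on that halfspace $|\vvec^{T}\M| = \vvec^{T}\M$; since conversely $|\vvec^{T}\M|\ge \vvec^{T}\M$ everywhere, the two problems attain a common maximum. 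Moreover any maximizer of the concave problem must satisfy $\vvec^{T}\M \ge 0$, for otherwise its value $\vvec^{T}\M - \lambda P(\vvec) < 0$ would be beaten by the feasible point $\vvec = 0$ (using $\lambda \ge 0$ and $P \ge 0$); this pins down the matching sign $\uvec^{*} = 1$ and confirms that a solution of the concave program, paired with this $\uvec^{*}$, solves \eqref{sing_fac_rpls}.

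Finally I would invoke Proposition~\ref{prop_rpls_sol} with $\uvec'$ set to the scalar $1$, so that $\M\uvec' = \M$. The $\vvec$-update supplied there is, by construction, the global maximizer of the concave subproblem $\maximize_{\vvec}\ \vvec^{T}\M - \lambda P(\vvec)$ subject to $\vvec^{T}\vvec\le 1$, namely the normalized penalized-regression solution $\hat{\vvec}/\|\hat{\vvec}\|_{2}$ (or $\vvec^{*}=0$ when $\hat{\vvec}=0$). Since the reduction above shows that this concave subproblem \emph{is} the entire problem once $\uvec$ has been eliminated, its global maximizer is the global maximizer of \eqref{sing_fac_rpls}, yielding precisely the stated $\hat{\vvec}$ and requiring no alternation. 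The main obstacle is the middle paragraph: the reduced objective is convex and so the maximization is globally non-concave, and the whole strength of the corollary rests on using the symmetry of the penalty and the feasible set to replace it by an equivalent concave program whose solution is already described by Proposition~\ref{prop_rpls_sol}.
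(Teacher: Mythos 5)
Your proof is correct and takes essentially the same route as the paper's: eliminate the degenerate $\uvec$, reduce to the concave program $\maximize_{\vvec}\ \vvec^{T}\M - \lambda P(\vvec)$ subject to $\vvec^{T}\vvec \leq 1$, and read off the solution from Proposition~\ref{prop_rpls_sol}. If anything you are more careful than the paper, which asserts that ``the only feasible solution for $\uvec$ is $\uvec^{*}=1$'' and silently ignores $\uvec=-1$, whereas your symmetry argument (using $P(-\vvec)=P(\vvec)$ and the centrally symmetric ball) legitimately disposes of the negative sign.
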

This, then is an important advantage of our framework over competing
methods.

\subsection{RPLS Algorithm}

\begin{algorithm}[!!t]
\caption{$K$-Factor Regularized PLS}
\label{rpls_alg}
\begin{enumerate}
\item Center the columns of $\X$ and $\Y$.  Let $\hat{\M}^{(1)} = \X^{T} \Y$.
\item For $k = 1 \ldots K$:
\begin{enumerate}
\item Initialize $\uvec_{k}$ and $\vvec_{k}$ to the first left and
  right singular vectors of $\hat{\M}^{(k)}$.
\item Repeat until convergence:
\begin{enumerate}
\item Set $\uvec_{k} = \frac{  (\hat{\M}^{(k)})^{T}
  \vvec_{k}}{|| (\hat{\M}^{(k)})^{T} \vvec_{k}||_{2}}$.  
\item Set $\hat{\vvec}_{k} = \argmin_{\vvec'_{k}} \left\{ ||
  \hat{\M}^{(k)}  
  \uvec_{k} - \vvec'_{k} ||_{2}^{2} - \lambda_{k} P( \vvec'_{k} )
  \right\}$. 
\item Set $\vvec_{k} =  \hat{\vvec}_{k} / ||
  \hat{\vvec}_{k} ||_{2}$ if  $|| \hat{\vvec}_{k} ||_{2} > 0$, and set
  $\vvec_{k} = 0$ and exit the algorithm otherwise.
\end{enumerate}
\item RPLS Factor:  $\zvec_{k} = \X \vvec_{k}$.
\item RPLS projection matrix: Set $\R^{(k)} = [
  \R^{(k-1)} \ \ \X^{T} \zvec_{k} /  \zvec_{k}^{T}   \zvec_{k} ]$ and
  $\Pmat_{k} = \mathbf{I} -   \R^{(k)} ( (\R^{(k)})^{T}
  \R^{(k)} )^{-1} (\R^{(k)})^{T}$.  
\item Orthogonalization Step: $\hat{\M}^{(k+1)} =  \Pmat_{k} \hat{\M}^{(k)}$. 
\end{enumerate}
\item Return RPLS Factors $\zvec_{1} \ldots \zvec_{K}$ and RPLS
  Loadings: $\vvec_{1} \ldots \vvec_{K}$.
\end{enumerate}
\end{algorithm}

Given our RPLS optimization framework and solution, we now put these
together in the RPLS algorithm, Algorithm \ref{rpls_alg}.  Note that
this algorithm is a direct extension of the SIMPLS algorithm
\citep{de_jong_simpls_1993}, 
where the solution to our single-factor RPLS problem,
\eqref{sing_fac_rpls}, replaces the typical eigenvalue problem in Step
2 (b).  Since our RPLS problem is 
non-concave, there are potentially many local solutions and
thus the initializations of $\uvec$ and $\vvec$ are important.  Similar to
much of the Sparse PCA literature \citep{zou_sparse_pca_2006,
  shen_spca_2008}, we recommend initializing
these factors to the global single-factor SVD solution, Step 2 (a).
Second, notice that choice of the regularization parameter, $\lambda$,
is particularly important.  If $\lambda$ is large enough that
$\vvec_{k} = 0$, then the $k^{th}$ RPLS factor would be zero and the
algorithm would cease.  Thus, care is needed when selecting the
regularization parameters to ensure they remain within the relevant
range.  For the special case where $q = 1$, computing $\lambda_{max}^{(k)}$,
the value at which $\hat{\vvec}_{k} = 0$, is a straightforward
calculation following from the Karush-Khun-Tucker conditions.  With
the LASSO penalty, for example, this gives $\lambda_{max}^{(k)} =
\mathrm{max}_{i} | \hat{\M}^{(k)}_{i} |$ \citep{glmnet}.  For general $q$,
however, $\lambda_{max}$ does not have a closed form.  While one could
use numerical solvers to find this value, this is a needless
computational effort.  Instead, we recommend to perform the algorithm
over a range of $\lambda$ values, discarding any values resulting in a
degenerate solution from consideration.  Finally, unlike
deflation-based Sparse PCA methods which can exhibit poor behavior for
very sparse solutions, due to orthogonalization with respect to the
data, our RPLS loadings and factors are well behaved with
large regularization parameters.

Selecting the appropriate regularization
parameter, $\lambda$, is an important practical consideration.
Existing methods that incorporate regularization in PLS have 
suggested using cross-validation or other model selection methods in
the ultimate regression or classification stage of the full PLS
procedure \citep{reiss_fpls_2007, chun_sparse_pls_2010}.  While one
could certainly implement these 
approaches within our RPLS framework, we suggest a simpler and more
direct approach.  We select $\lambda$ within the dimension reduction
stage of RPLS, specifically in Step 2 (b) of our RPLS Algorithm.
Doing so, has a number of advantages.  First, this increases
flexibility as it separates selection of $\lambda$ from deciding how
many factors, $K$, to use in the prediction stage, permitting  a separate
regularization parameter, $\lambda_{k}$, to be selected for each RPLS
factor.  Second, coupling
selection of the regularization parameter to the prediction stage
requires fixing the supervised modeling method before computing the
RPLS factors.  With our approach, the RPLS factors can be computed and
stored to use as predictors in a variety of modeling procedures.  Finally,
separating selection of $\lambda_{k}$ and $K$ in the prediction stage
is computationally advantageous as a grid search over tuning parameters
is avoided.  Nesting selection of $\lambda$ within Step 2 (b) is also
faster as recent developments such as warm starts and active set
learning can be used to efficiently fit the entire path of solutions
for many penalty types \citep{glmnet}.  Practically, selecting $\lambda_{k}$
within the dimension reduction stage is analogous to selecting the
regularization parameters for Sparse PCA methods on $\hat{\M}^{(k)}$.
Many approaches including cross-validation \citep{shen_spca_2008,
   owen_2009_cv} and BIC methods
\citep{lee_ssvd_2010, allen_gmd_2011} have been suggested for this
purpose; in results given in 
this paper, we have implemented the BIC method as described in
\citet{allen_gmd_2011}.  Selection of the number of RPLS factors, $K$,
will largely 
be dependent on the supervised method used in the prediction stage,
although cross-validation can be used with an method.

Computationally, our algorithm is an efficient approach.  As discussed
in the previous section, the particular computational 
requirements for computing the RPLS loadings in Step 2 (b) are penalty
specific, but are minimal for a wide class of commonly used penalties.
Beyond Step 2 (b), the major computational requirement is 
inverting the weight matrix, $\R_{k}^{T} \R_{k}$, to compute the
projection matrix.  Since this matrix is found recursively via the
Gram-Schmidt scheme, however, employing properties of the Schur
complement can reduce the computational effort to that of matrix
multiplication $O(pk)$ \citep{horn_johnson}.  Finally, notice that we take
the RPLS factors to be the direct projection of the data by the RPLS
loadings.  
Overall, the advantages of our RPLS framework and algorithm include
(1) computational efficiency, (2) flexible modeling, and (3) direct
estimation of the RPLS loadings and factors.

\subsection{RPLS for Regression and Classification}
\label{section_pred}

While many think of PLS as a single approach to regression, we have
separated the dimension reduction stage from the prediction stage
where the PLS factors, $\Z$, replace the original predictors.
As many have advocated using PCA, or 
even supervised PCA \citep{bair_2006}, as a dimension reduction
technique prior supervised modeling, RPLS may be a powerful
alternative in this context.  While studying the behavior of our RPLS
method for particular supervised techniques is beyond the scope of
this paper, we outline here some considerations for using our
framework in common regression and classification problems.

Applying our RPLS framework in regression problems where $\Y$ encodes
the response, is straightforward.  For univariate responses, our
framework has the added benefit that each RPLS loadings vector is the
global solution to the underlying penalized optimization problem.
With traditional PLS regression, there is an interesting connection between
Krylov sequences and the PLS regression coefficients, namely the
latter are the minimum to a least squares problem constrained so that
the coefficients lie within the Krylov subspace spanned by $\{ \Y,
\X \X^{T} \Y, (\X \X^{T} )^{2} \Y , \ldots , ( \X \X^{T} )^{K} \Y
\}$ \citep{kramer_2007}.  As we take the RPLS factors to be a direct
projection of the 
RPLS loadings, this connection to Krylov sequences is broken, although
perhaps for prediction purposes, this is immaterial.

While in the context of regression, traditional approaches
such as cross-validation can be used to find the number of RPLS
factors, $K \leq n$, an approach suggested by
\citet{huang_pen_pls_2004} may have 
added benefits for large data sets.  They propose to post-select the
number of factors by adding a sparse penalty, minimizing the following
criterion: $|| \Y - \Z \beta ||_{2}^{2} + \gamma || \beta ||_{1}$.  As the
PLS or RPLS factors are orthogonal, there is a simple solution for the
coefficients that automatically selects the number of factors,
$\hat{\beta} = S( \Z^{T} \Y, \gamma )$.  In our simulation study in
Section \ref{section_sims}, we use this penalization approach to
automatically selecting the number of RPLS factors.  
Also we note that a recent paper
directly computes the degrees of freedom for PLS regression that can
be used for model selection with BIC and AIC methods
\citep{kramer_2011}.   As
the relationship between Krylov sequences and our RPLS factors no
longer holds, however, this approach cannot be directly employed with our
methods.


Many have suggested using the PLS factors for classification by coding
the response as dummy variables indicating the classes for discriminant
analysis \citep{barker_pls_2003} or by using the exponential family links for
generalized linear models \citep{marx_pls_glm_1996,
  chung_spls_class_2010}, approaches that can be used in conjunction
with RPLS.   Interestingly,
\citet{barker_pls_2003} have shown that coding the response with dummy
variables 
scaled according the the class size yields PLS loadings vectors that
are a scaled version of Fisher's discriminant vectors.  Thus, our RPLS
framework may lead to an alternative formulation for sparse or
regularized LDA, a connection which we leave to future work to
explore.  Finally, while again cross-validation approaches can be used
to select the number of RPLS factors for classification, it is common
to compute a number of discriminant vectors equal to the number of
classes.  For our case study in Section \ref{section_nmr}, this is the
approach we adopt.



\section{Extensions}
\label{section_ext}

As our framework for regularizing PLS is general, there are many
possible extensions of our methodology.  We focus here on two novel
extensions of 
PLS and RPLS that will be particularly useful for understanding
spectroscopy data.  These include generalizations for PLS and RPLS
with structured data and non-negative PLS and RPLS.

\subsection{Generalized PLS for Structured Data}

Recently, \citet{allen_gmd_2011} proposed a generalization of PCA
(GPCA) that
is a appropriate for high-dimensional structured data,  or data in which the variables are
associated with some known distance metric.  As motivation, consider
NMR spectroscopy data where variables are ordered on the spectrum and
variables at adjacent chemical shifts are known to be highly
correlated.  Classical multivariate techniques such as PCA and PLS
ignore these structures; GPCA
encodes structure into a matrix factorization problem
through positive semi-definite quadratic operators such as Laplacians
or kernel smoothers \citep{allen_gmd_2011, allen_snn_gpca_2011}.
Similar to GPCA, we seek to directly 
account for known structure in PLS and within our RPLS framework.

Let us define the quadratic operator, $\Q \in \Re^{p \times p}: \Q
\succeq 0$, that encodes the known structural relationships between
variables.  
By transforming all inner-product spaces to those induced 
by the $\Q$-norm, we can define our single-factor Generalized RPLS
optimization problem in the following manner:
\begin{align}
\label{g_rpls}
\maximize_{\vvec, \uvec} \ \  \vvec^{T} \Q  \M
\uvec - \lambda P( \vvec ) \ \ \textrm{subject to } \ \
\vvec^{T} \Q 
\vvec \leq 1, \  \& \  \uvec^{T} \uvec = 1.
\end{align}
For the multi-factor Generalized RPLS problem, the factors and
projection matrices are also 
changed.  The $k^{th}$ factor is given by $\zvec_{k} = \X \Q
\vvec_{k}$, the weighting matrix, $\R_{k} = [ \R_{k-1} \ \ \X^{T}
  \zvec_{k} / \zvec_{k}^{T} \zvec_{k} ]$ as before, and the
projection matrix is $\Pmat_{k} = \mathbf{I} - \R_{k}^{T} ( \R_{k}^{T}
\Q \R_{k} )^{-1} \R_{k}^{T}$. The deflated cross-products matrix is
then given by $\hat{\M}^{(k)} = \Pmat_{k-1} \Q \hat{\M}^{(k-1)}$.
Note that if $\lambda = 0$ and if the
inequality constraint is forced to be an equality constraint, then we
have the optimization problem for Generalized PLS.  Notice also that
instead of enforcing orthogonality of the PLS loadings with respect
to the data, $\vvec_{k}^{T} \X^{T} \X \vvec_{j}$, the Generalized PLS
problem enforces orthogonality in a projected data space,
$\vvec_{k}^{T} \Q \X^{T} \X \Q \vvec_{j}$.  If we let $\tilde{\Q}$ be a matrix
square root of $\Q$ as defined in \citet{allen_gmd_2011}, then
\eqref{g_rpls} is equivalent to the multi-factor RPLS problem for $\tilde{\X}
= \X \tilde{\Q}$ and $\tilde{\vvec} = \tilde{\Q} \vvec$.  This
equivalence is shown in the proof of the solution to \eqref{g_rpls}.

As with PLS and our RPLS framework, Generalized PLS and RPLS can be
solved by coordinate-wise updates that converge to the global and
local optimum respectively:
\begin{proposition}
\label{prop_g_rpls}
\begin{enumerate}
\item Generalized PLS: The Generalized PLS problem, \eqref{g_rpls} when
  $\lambda = 0$, is solved by the first set of GPCA factors of
  $\M$.  The global solution to the Generalized PLS
  problem can be found by iteratively updating the following until
  convergence: 
$\vvec =  \M \uvec  / ||  \M
  \uvec ||_{\Q}$ and $\uvec =  \M^{T} \Q
   \vvec / || \M^{T} \Q \vvec ||_{2}$, where $|| x ||_{\Q}$ is defined
  as $\sqrt{ x^{T} \Q x }$.  
\item Generalized RPLS: Under the assumptions of Proposition
  \ref{prop_rpls_sol}, let \\ $\hat{\vvec} = \argmin_{\vvec'} \{ ||
   \M \uvec - \vvec' ||_{\Q}^{2} + \lambda P( \vvec'
  ) \}$, then the coordinate-wise updates to \eqref{g_rpls} are given
  by: $\vvec^{*} = \hat{\vvec} / || \hat{\vvec} ||_{\Q}$ if
  $|| \hat{\vvec} ||_{\Q} > 0$ and $\vvec^{*} = 0$ otherwise, and
  with $\uvec^{*}$ defined as above.
When updated iteratively, these converge to a local optimum of
\eqref{g_rpls}.  
\end{enumerate}
\end{proposition}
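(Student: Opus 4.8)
The plan is to prove both parts at once by reducing the Generalized RPLS problem \eqref{g_rpls} to the ordinary RPLS problem \eqref{sing_fac_rpls}, to which Proposition \ref{prop_rpls_sol} already applies, via the change of variables foreshadowed before the statement. Let $\tilde{\Q}$ denote the symmetric square root of $\Q$ (so $\tilde{\Q} = \tilde{\Q}^{T}$ and $\tilde{\Q}\tilde{\Q} = \Q$), set $\tilde{\X} = \X \tilde{\Q}$, and define $\tilde{\vvec} = \tilde{\Q}\vvec$, the transformed cross-products matrix $\tilde{\M} = \tilde{\X}^{T}\Y = \tilde{\Q}\M$, and the transformed penalty $\tilde{P}(\tilde{\vvec}) = P(\tilde{\Q}^{-1}\tilde{\vvec})$. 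First I would record the three algebraic identities that drive everything: the objective transforms as $\vvec^{T}\Q\M\uvec = (\tilde{\Q}\vvec)^{T}(\tilde{\Q}\M)\uvec = \tilde{\vvec}^{T}\tilde{\M}\uvec$; the $\vvec$-constraint transforms as $\vvec^{T}\Q\vvec = \tilde{\vvec}^{T}\tilde{\vvec}$; and, since $\tilde{\Q}^{-1}$ is a fixed invertible linear map, $\tilde{P}$ inherits from $P$ the property of being a norm or semi-norm. Consequently \eqref{g_rpls} is \emph{identically} the RPLS problem \eqref{sing_fac_rpls} written in the variables $(\tilde{\vvec},\uvec)$ with $\M$ replaced by $\tilde{\M}$ and $P$ by $\tilde{P}$.

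With the reduction in place, I would apply Proposition \ref{prop_rpls_sol} to the tilde-problem and translate its updates back. For the $\uvec$-update, the proposition gives $\uvec^{*} = \tilde{\M}^{T}\tilde{\vvec}'/\|\tilde{\M}^{T}\tilde{\vvec}'\|_{2}$, and since $\tilde{\M}^{T}\tilde{\vvec}' = \M^{T}\tilde{\Q}\tilde{\Q}\vvec' = \M^{T}\Q\vvec'$, this is exactly the claimed $\uvec^{*} = \M^{T}\Q\vvec'/\|\M^{T}\Q\vvec'\|_{2}$. For the $\vvec$-update, the key observation is that the Euclidean least-squares objective in the tilde-problem equals the $\Q$-weighted objective in the original variables: $\|\tilde{\M}\uvec - \tilde{\vvec}'\|_{2}^{2} = \|\tilde{\Q}(\M\uvec - \vvec')\|_{2}^{2} = \|\M\uvec - \vvec'\|_{\Q}^{2}$. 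Hence the minimizer $\hat{\tilde{\vvec}}$ of the tilde penalized regression corresponds under $\tilde{\Q}^{-1}$ to the stated $\hat{\vvec} = \argmin_{\vvec'}\{\|\M\uvec - \vvec'\|_{\Q}^{2} + \lambda P(\vvec')\}$. Finally the normalization transfers because $\|\hat{\tilde{\vvec}}\|_{2} = \|\tilde{\Q}\hat{\vvec}\|_{2} = \|\hat{\vvec}\|_{\Q}$, so $\tilde{\vvec}^{*} = \hat{\tilde{\vvec}}/\|\hat{\tilde{\vvec}}\|_{2}$ maps to $\vvec^{*} = \hat{\vvec}/\|\hat{\vvec}\|_{\Q}$, with the degenerate case $\|\hat{\vvec}\|_{\Q}=0$ matching $\hat{\tilde{\vvec}}=0$. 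This establishes Part 2, and the monotone-ascent and local-optimum conclusions carry over from Proposition \ref{prop_rpls_sol} because the linear bijection $\vvec\mapsto\tilde{\Q}\vvec$ preserves the value of the objective and hence maps stationary points to stationary points.

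For Part 1 I would specialize to $\lambda = 0$ with the inequality constraint active. The penalized regression then has the closed-form minimizer $\hat{\tilde{\vvec}} = \tilde{\M}\uvec'$, so the normalized updates become $\vvec = \M\uvec/\|\M\uvec\|_{\Q}$ and $\uvec = \M^{T}\Q\vvec/\|\M^{T}\Q\vvec\|_{2}$, exactly as stated. In the tilde variables this is the alternating maximization of the bilinear form $\tilde{\vvec}^{T}\tilde{\M}\uvec$ over the two unit spheres, i.e.\ the power iteration whose fixed point is the leading singular pair of $\tilde{\M}$ and whose maximum is the largest singular value---the global optimum by the usual variational characterization of the SVD \citep{horn_johnson}. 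Transforming the leading singular vectors of $\tilde{\M} = \tilde{\Q}\M$ back through $\tilde{\Q}^{-1}$ yields precisely the first generalized SVD / GPCA factors of $\M$ with respect to $\Q$ in the sense of \citet{allen_gmd_2011}, giving the identification claimed in Part 1.

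The one genuine obstacle is that the change of variables requires $\tilde{\Q}^{-1}$ to exist, whereas \eqref{g_rpls} is stated only for $\Q \succeq 0$. The argument above goes through verbatim when $\Q$ is positive definite; for the merely positive-semidefinite case I would follow the device of \citet{allen_gmd_2011}, restricting $\vvec$ to the range of $\Q$ (on which $\tilde{\Q}$ is invertible) so that $\|\cdot\|_{\Q}$ is a genuine norm there and the transformed penalty $\tilde{P}$ together with its penalized-regression solution remain well defined. Apart from this bookkeeping, every step is a routine verification of linear-algebra identities, and no optimization argument beyond Proposition \ref{prop_rpls_sol} is needed.
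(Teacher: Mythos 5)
Your proposal is correct and follows essentially the same route as the paper's proof: both reduce the Generalized (R)PLS problem to the ordinary one via the matrix square root $\tilde{\Q}$, setting $\tilde{\X} = \X\tilde{\Q}$, $\tilde{\vvec} = \tilde{\Q}\vvec$, checking that objective, constraint, and penalized-regression update all transform as claimed, and then invoking Proposition \ref{prop_rpls_sol} (resp.\ the SVD/GPCA characterization) for the two parts. The only difference is one of emphasis: you carry out the single-factor update translation and the $\lambda=0$ power-iteration argument explicitly where the paper delegates these to results in \citet{allen_gmd_2011}, while the paper spends most of its effort verifying the multi-factor deflation equivalence ($\tilde{\Pmat}_{k-1}\tilde{\M}$ versus $\Pmat_{k-1}\Q\M$), which is not strictly part of the stated single-factor proposition.
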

Thus, the solution to our Generalized RPLS problem can be solved by a
generalized penalized least squares problem.  Algorithmically, solving
the multi-factor Generalized PLS and RPLS problems follow the same
structure as that of  Algorithm \ref{rpls_alg}.
The solutions outlined above replace Step 2 (b), with the altered
Generalized RPLS factors and projections matrices replacing Steps 2
(c), (d), and (e).  In other words, Generalized PLS or RPLS is
performed by finding the GPCA or Regularized GPCA factors of a
deflated cross-products matrix, where the deflation is performed to
rotate the cross-products matrix so that it is orthogonal to the data
in the $\Q$-norm.  Computationally, these algorithms can be performed
efficiently using the techniques described in \citet{allen_gmd_2011}
that do not require inversion or taking eigenvalue decompositions of
$\Q$.  Thus, the Generalized PLS and RPLS methods are
computationally feasible for high-dimensional data sets.


We have shown the most basic extension of GPCA technology to PLS and
our RPLS framework, but there are other possible formulations.   For
two-way 
data, projections in the ``sample'' 
space may be appropriate in addition to projecting variables in the
$\Q$-norm.  With neuroimaging data, for example, 
the data matrix may be oriented as brain locations, voxels, by time
points.  As the time series are most certainly not independent, one
may wish to transform these inner product spaces using another
quadratic operator, $\W \in \Re^{n \times n}$, changing $\M$ to
$\X^{T} \W \Y$ and $\R_{k}$ to $\R_{k} = [ \R_{k-1} \ \ \X^{T}
  \W \zvec_{k} / \zvec_{k}^{T} \W \zvec_{k} ]$, analogous to
\citet{allen_gmd_2011}.  Overall, we have outlined a novel extension
of PLS and our RPLS methodologies to work with high-dimensional
structured data.





\subsection{Non-Negative PLS}

Many have advocated estimating non-negative matrix factors
\citep{lee_nmf_1999} and 
non-negative principal component loadings
\citep{hoyer_sparse_nmf_2004} as a way to 
increase interpret-ability of multivariate methods.  For scientific
data sets such as NMR spectroscopy in which variables are naturally
non-negative, enforcing non-negativity of the loadings vectors can
greatly improve interpretability results and the performance of
methods \citep{allen_snn_gpca_2011}.  Here, we illustrate how to
incorporate non-negative 
loadings into our RPLS framework.   Consider the optimization problem
for single-factor Non-negative RPLS: 
\begin{align}
\label{nn_rpls}
\maximize_{\vvec, \uvec} \ \  \vvec^{T}  \M
\uvec - \lambda P( \vvec ) \ \ \textrm{subject to } \ \
\vvec^{T} 
\vvec \leq 1, \  \uvec^{T} \uvec = 1 \ \& \ \vvec \geq
0.
\end{align}
Solving this optimization problem is a simple adaption
of Proposition \ref{prop_rpls_sol}; the penalized regression
problem is replaced by a penalized non-negative regression problem.
For many penalty types, these problems have a simple solution.  
With the $\ell_{1}$-norm penalty, for example, 
the soft-thresholding operator in the update for $\vvec$ is replaced
by the positive soft-thresholding operator: $\vvec = P( \M \uvec,
\lambda) = ( \M \uvec - \lambda )_{+}$ \citep{allen_snn_gpca_2011}.
Our RPLS 
framework, then, gives a simple and computationally efficient method
for enforcing non-negativity in the PLS loadings.  Also, as in
\citet{allen_snn_gpca_2011}, non-negativity and quadratic operators
can be used in 
combination for PLS to create flexible approaches for 
high-dimensional data sets.


\section{Simulation Studies}
\label{section_sims}

We explore the performance of our RPLS methods for regression in a
univariate and a multivariate simulation study.

\subsection{Univariate Simulation}

In this simulation setting, we compare the mean squared prediction
error and variable selection performance of RPLS against
competing methods in the univariate regression response setting with
correlated 
predictors. Following the approach in Section 5.3 of
\citet{chun_sparse_pls_2010}, 
we include scenarios where $n$ is greater than $p$ and where $n$ is
less than $p$ with differing levels of noise. For the $n > p$ setting,
we use $n = 400$ and $p = 40$; for the $n < p$ setting, we use $n =
40$ and $p = 80$. In each case, 75\% of the $p$ predictors are true
predictors, while the remaining 25\% are spurious predictors that are
not used in the generation of the response. For the low and high noise
scenarios, we use signal-to-nose ratios (SNR) of 10 and 5.

To create correlated predictors as in \citet{chun_sparse_pls_2010}, we
construct hidden variables 
$H_1,\ldots,H_3$, where $H_i \sim \mathcal{N}(0,
  25\mathbf{I}_n)$. The columns of the predictor matrix $X_i$ 
  are generated as the sum of a hidden variable and independent random
  noise as follows: $X_i = H_1 + \varepsilon_i$ for $1 \leq i
  \leq 3p/8$, $X_i = H_2 + \varepsilon_i$ for $3p/8 < i \leq 3p/4$, and
$X_i = H_3 + \varepsilon_i$ for $3p/4 < i \leq p$, where $\varepsilon_i \sim
\mathcal{N}(0, \mathbf{I}_n)$. The response vector $Y = 3H_1 - 4H_2 +
f$, where $f \sim \mathcal{N}(0, 25\mathbf{I}_n /
\text{SNR})$. Training and test sets for all settings of $n$, $p$ and
SNR are created using this approach.

For the comparison of methods, $\mathbf{X}$ and $Y$ are standardized,
and parameter selection is carried out using 10-fold cross validation
on the training data. For the sparse partial least squares (SPLS) method
described in \citet{chun_sparse_pls_2010}, the \verb|spls| R package
 \citep{spls_package} is used with 
$\eta$ chosen from the sequence $(0.1, 0.2, \ldots, 0.9)$ and $K$ from
5 to 10. Note that for our methods, we choose to select $K$
automatically via the lasso penalized PLS regression problem described
in Section \ref{section_pred} with penalty parameter $\gamma$. 
Thus for RPLS, lasso penalties were used with $\lambda$ and $\gamma$
 chosen from 25 equally spaced values between $10^{-5}$ and
$\log(\max(|\mathbf{X}'Y|))$ on the log scale. For the lasso and
elastic net, the \verb|glmnet| R package \citep{glmnet} is used with
the same choices 
for $\lambda$.

The average mean squared prediction error (MSPE), true positive rate (TPR),
and false positive rate (FPR) across 30 simulation runs are given in
Table \ref{univariate_sim}. The penalized regression methods clearly
outperform traditional PLS in terms of the mean squared prediction
error, with RPLS having the best prediction accuracy among all methods. SPLS and RPLS are nearly perfect in correctly identifying the
true variables, but SPLS tends to have higher rates of false
positives. In contrast, the lasso and elastic net have high
specificity, but fail to identify many true predictors.

\begin{table}[ht]
\begin{center}
\scalebox{.9}{
\hspace{-.65in} \begin{tabularx}{\textwidth}{lrrrXlrrr}
\multicolumn{4}{l}{\textbf{Simulation 1}: n = 400, p = 40, SNR =
  10} & & \multicolumn{4}{l}{\textbf{Simulation 2}: n = 400, p = 40, SNR =
  5}\\
Method & MSPE (SE) & TPR (SE) & FPR (SE) & & Method & MSPE (SE) & TPR
(SE) & FPR (SE)\\
\cline{1-4}\cline{6-9}
PLS & 504.2  & & & & PLS & 655.2  & & \\
    & (293.8) & & & &    & (212.9) & & \\
Sparse PLS & 72.6  & 1.00  & 0.61 & & Sparse PLS &
143.7  & 1.00 & 0.66\\
 & (4.1) & (0.00) &  (0.27) & & & (9.8) & (0.00) & (0.29)\\
RPLS & 66.4 & 1.00 & 0.22 & & RPLS & 131.4 & 1.00 & 0.19 \\
& (3.8) & (0.00) & (0.35) & & & (9.3) & (0.00) & (0.37) \\
Lasso & 70.9 & 0.60 & 0.00 & & Lasso & 139.3 & 0.49 & 0.00 \\
& (4.9) & (0.07) & (0.02) & & & (9.5) & (0.07) & (0.00) \\
Elastic net & 70.5 & 0.61 & 0.01 & & Elastic net & 139.0 & 0.50 & 0.00
\\
& (4.5) & (0.07) & (0.03) & & & (9.5) & (0.07) & (0.00)\\
\multicolumn{9}{c}{}\\
\multicolumn{4}{l}{\textbf{Simulation 3}: n = 40, p = 80,  SNR =
  10} & & \multicolumn{4}{l}{\textbf{Simulation 4}: n = 40, p = 80, SNR =
  5}\\
Method & MSPE (SE) & TPR (SE) & FPR (SE) & & Method & MSPE (SE) & TPR
(SE) & FPR (SE)\\
\cline{1-4}\cline{6-9}
PLS & 624.1  & & & & PLS & 612.6  & & \\
    & (256.5) & & & &  & (256.8) & & \\
Sparse PLS & 104.9  & 0.99  & 0.77 & & Sparse PLS &
206.4  & 0.98 & 0.70\\
 & (26.3) & (0.05) &  (0.30) & & & (53.9) & (0.07) & (0.31)\\
RPLS & 76.0 & 1.00 & 0.45 & & RPLS & 155.1 & 1.00 & 0.52 \\
& (20.8) & (0.00) & (0.43) & & & (59.0) & (0.00) & (0.43) \\
Lasso & 83.7 & 0.17 & 0.02 & & Lasso & 178.3 & 0.12 & 0.01 \\
& (19.7) & (0.04) & (0.06) & & & (49.7) & (0.04) & (0.02) \\
Elastic net & 82.4 & 0.17 & 0.02 & & Elastic net & 172.7 & 0.12 & 0.01
\\
& (18.6) & (0.03) & (0.04) & & & (46.0) & (0.04) & (0.03)\\
\end{tabularx}
}
\end{center}
\caption{\em Comparison of mean squared prediction error (MSPE), true
  positive rate(TPR) and false positive rate (FPR) with standard
  errors (SE).}
\label{univariate_sim}
\end{table}

\subsection{Multivariate Simulation}

In this simulation setting, we compare the mean squared prediction
error of regularized PLS against competing methods for multivariate
regression. As in the univariate simulation, we include scenarios
where $n > p$ and $n < p$ with varying levels of noise, but now our
response $\mathbf{Y}$ is a matrix of dimension $n \times q$ with $q =
10$. For the 
$n > p$ scenario, we use $n = 400$ and $p = 40$ with 5 true
predictors. For the $n < p$ scenario, we use $n = 40$ and $p = 80$
with 10 true predictors. In each case, we test the methods using
signal to noise ratios (SNR) of 2 and 1.

The simulated data is generated using 8 binary hidden variables
$H_1, \ldots, H_8$ with entries drawn from the Bernoulli(0.5)
distribution. The coefficient matrix $\mathbf{A}$ contains standard
normal random entries for the first $p_{\text{true}}$ columns, with
the remaining columns set to 0. The predictor matrix $\mathbf{X} =
\mathbf{H}\cdot\mathbf{A} + \mathbf{E}$, where the entries of
$\mathbf{E}$ are drawn from the $\mathcal{N}(0, 0.1^2)$
distribution. The coefficient matrix $\mathbf{B}$ contains entries
drawn from the $\mathcal{N}(0, \text{SNR}\cdot n\cdot q /
\text{tr}(\mathbf{H}\mathbf{H}'))$ distribution. The response matrix
$\mathbf{Y} = \mathbf{H}\cdot \mathbf{B} + \mathbf{F}$, where the
entries of $\mathbf{F}$ are drawn from the standard normal
distribution. Both training and test sets are generated using this
procedure, and both $\mathbf{X}$ and $\mathbf{Y}$ are
standardized.

For the penalized methods including sparse PCA (SPCA) and regularized PLS
(RPLS) the penalty parameter $\lambda$ is chosen
from 25 equally spaced values between $-5$ and
$\log(\max(|\mathbf{X}'\mathbf{Y}|))$ on the log scale using the BIC
criterion. 
 For RPLS,
$\gamma$, the PLS regression penalty parameter for selecting $K$, is chosen
 from the same set of options as 
$\lambda$ using the BIC criterion. To obtain the coefficient
$\mathbf{\beta} = \mathbf{V}\mathbf{Z}'\mathbf{Y}_{\text{training}}$,
the columns of $\mathbf{V}$ and $\mathbf{Z}$ were normalized.
The results shown in Table \ref{multivariate_sim} demonstrate that
regularized PLS outperforms both sparse PCA and standard PLS.

\begin{table}[ht]
\begin{center}
\begin{tabular}{lllllll}
\multicolumn{3}{l}{\textbf{Simulation 1}: n = 400, p = 40, SNR =
  2} & & \multicolumn{3}{l}{\textbf{Simulation 2}: n = 400, p = 40, SNR =
  1}\\
Method \hspace{.5in} & MSPE (SE) & && Method \hspace{.5in} & MSPE (SE)
  &\\
\cline{1-3}
\cline{4-7}
SPCA & 2376.2 (337) && & SPCA & 2204.1 (313) & \\
PLS & 2567.7 (316) && & PLS & 2343.3  (281) &\\
RPLS & 404.7 (96) && & RPLS & 339.4 (175) &\\
\multicolumn{7}{c}{}\\
\multicolumn{3}{l}{\textbf{Simulation 3}: n = 40, p = 80,  SNR =
  2} & & \multicolumn{3}{l}{\textbf{Simulation 4}: n = 40, p = 80, SNR =
  1}\\
Method \hspace{.5in} & MSPE (SE) && & Method \hspace{.5in}& MSPE (SE) & \\
\cline{1-3} 
\cline{4-7}
SPCA & 711.3 (107) && & SPCA & 721.7 (109) & \\
PLS & 647.0  (101) && & PLS & 659.9 (81) &\\
RPLS &  142.0 (3) && & RPLS & 133.0 (3) &\\
\end{tabular}
\caption{\em Comparison of mean squared prediction error (MSPE) with
  standard errors (SE) for
  multivariate methods.}
\label{multivariate_sim}
\end{center}
\end{table}

\section{Case Study: NMR Spectroscopy}
\label{section_nmr}


\begin{table}[!!t]
\begin{tabular}{l|r|r|}
& Training Error & Leave-one-out CV Error \\
\hline
PCA + LDA & 0.1167 & 0.1852 \\
PLS + LDA (de Jung, 1993) & 0.0000 & 0.1481 \\
GPCA + LDA (Allen {\it et. al}, 2011) & 0.1833 & 0.1481 \\
GPLS + LDA & 0.0000 & 0.1111 \\
SPCA + LDA (Shen \& Huang, 2008) & 0.1167 & 0.1481 \\
SPLS + LDA (Chun \& Keles, 2010) & 0.0000  & 0.1111 \\
SGPCA + LDA (Allen \& Maleti{\'c}-Savati{\'c}, 2011) & 0.1833 & 0.1481 \\
SGPLS + LDA & {\bf 0.0000} & {\bf 0.0741} \\
\end{tabular}
\caption{\em Misclassification errors for methods applied to the
  neural cell NMR data.  Various methods were used to first reduce the
  dimension with the resulting factors used as predictors in linear
  discriminant analysis.}
\label{tab_class}
\end{table}

\begin{table}
\begin{center}
\begin{tabular}{l|r}
& Time in Seconds \\
\hline
Sparse PLS (via RPLS) & 1.01 \\
Sparse PLS ({\tt R} package {\tt spls}) & 1033.86 \\
Sparse Non-negative GPLS  & 28.16 
\end{tabular}
\caption{\em Timings Comparisons.  Time in seconds to compute the
  entire solution path for the neural cell NMR data.}
\end{center}
\label{tab_time}
\end{table}

We evaluate the utility of our methods through a case study on NMR
spectroscopy data, a classic application of PLS methods from the
chemometrics literature.  
We apply our RPLS methods to an {\it in vitro}
one-dimensional NMR data set  
consisting of 27 samples from five classes of
neural cell types: neurons, neural stem cells, microglia, astrocytes,
and ogliodendrocytes \citep{manganas_2007}, 
analyzed by some of the same authors using PCA methods in
\citet{allen_snn_gpca_2011}.  Data is 
pre-processed in the manner described in \citet{dunn_2005b}:  functional 
spectra is discretized into bins of size 0.04
ppms yielding a total of 2394 variables,  spectra for each sample
are baseline corrected and 
normalized to their integral, and variables are standardized.
 For all PLS methods, 
the response, $\Y$ is $27 \times 5$ and coded with indicators
inversely proportional to the sample size in each class as described
in \citet{barker_pls_2003}. 
For each 
method, five PLS or PCA factors were taken and used as predictors in
linear discriminant analysis to classify the NMR samples.  To be
consistent, the BIC method was used to select any penalty parameters
except for the Sparse PLS method of \citet{chun_sparse_pls_2010} where
the default in the 
{\tt R} package {\tt spls} was employed \citep{spls_package}.  The
Sparse GPCA and 
Sparse GPLS methods were applied with non-negativity constraints as
described in \citet{allen_snn_gpca_2011} and in Section
\ref{section_ext}.  Finally, for the 
generalized methods, the quadratic operator was selected by maximizing
the variance explained by the first component; a weighted
Laplacian matrix with weights inversely proportional to the
Epanechnikov kernel with a bandwidth of 0.2 ppms was employed
\citep{allen_gmd_2011}.

\begin{figure}[!!t]
\includegraphics[width=7in,clip=true,trim=0in .75in 0in 0in]{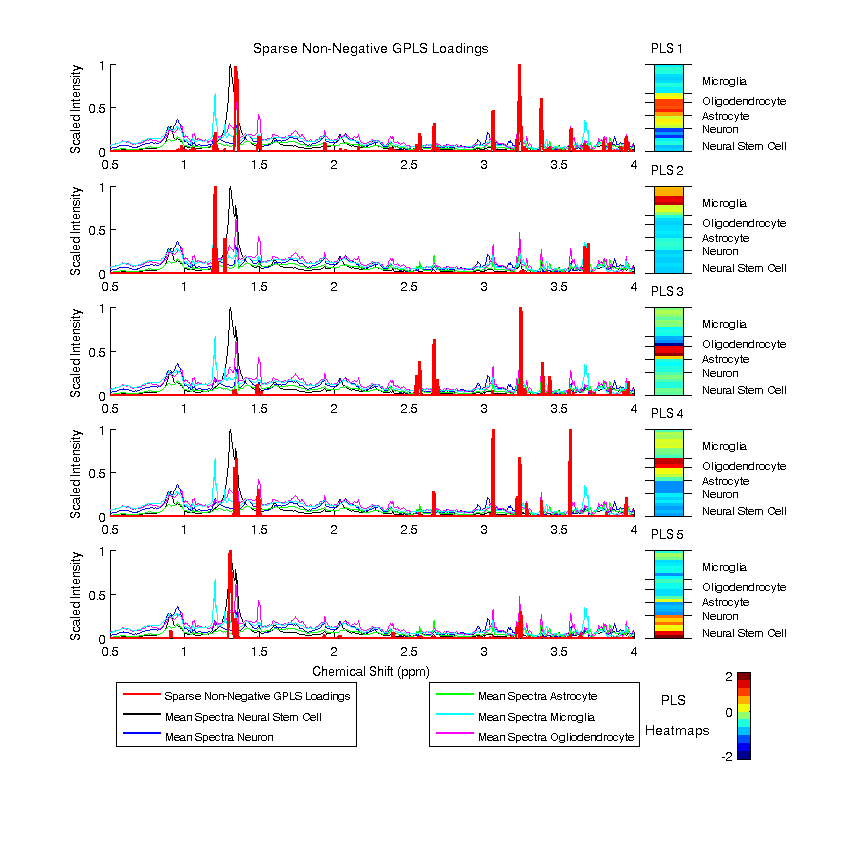}
\caption{\em Sparse Non-negative Generalized PLS loadings and sample
  PLS heatmaps for the neural cell NMR data.  The loadings are
  superimposed on the mean scaled 
  spectral intensities for each class of neural cells.}
\label{fig_nmr_ours}
\end{figure}

\begin{figure}[!!t]
\includegraphics[width=7in,clip=true,trim=0in .75in 0in 0in]{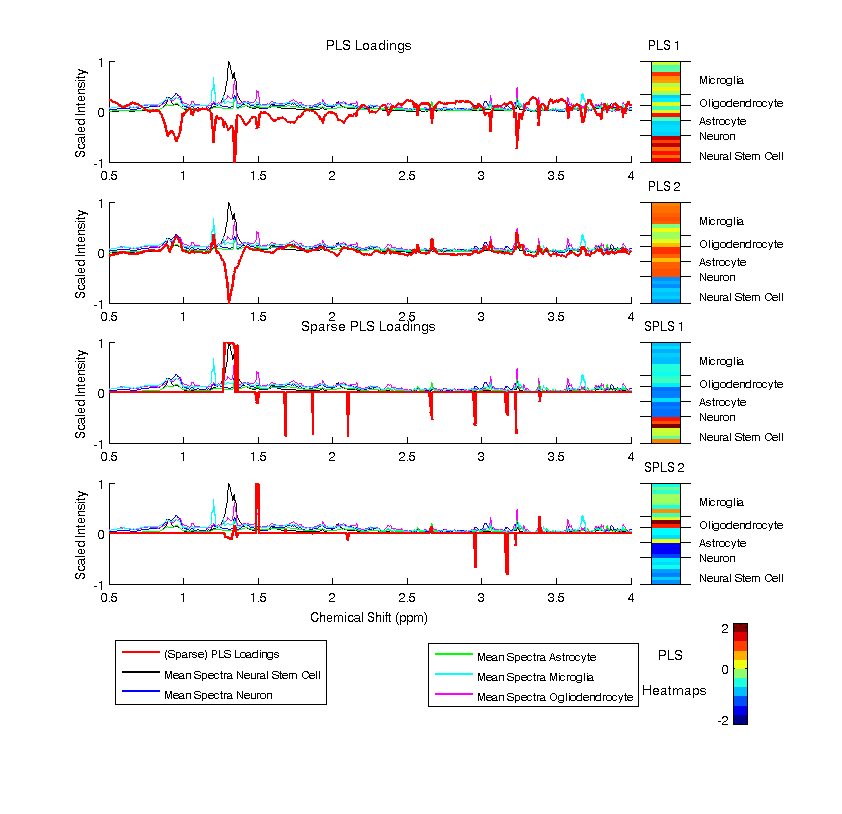}
\caption{\em The first two PLS and Sparse PLS
  \citep{chun_sparse_pls_2010} loadings and sample 
  PLS heatmaps for the neural cell NMR data.  The loadings are
  superimposed on the mean scaled 
  spectral intensities for each class of neural cells.}
\label{fig_nmr_others}
\end{figure}

In Table ~\ref{tab_class}, we give the training and leave-one out
cross-validation misclassification errors for our methods and
competing methods on the neural cell NMR data.   Notice
that our Sparse GPLS method yields the best error rates followed by
the Sparse PLS \citep{chun_sparse_pls_2010} and our GPLS methods.
Additionally, our 
Sparse GPLS methods are significantly faster than competing
approaches.  In Table ~\ref{tab_time}, the time in seconds to compute
the entire solution path (51 values of $\lambda$) is reported.  Timing
comparisons were done on a Intel Xeon X5680 3.33Ghz processor using
single-threaded scripts.

In addition to faster computation and better classification rates, our
method's flexibility leads to easily interpretable results.  We
present the Sparse GPLS loadings superimposed on the scaled spectra
from each neural cell type and sample heatmaps in Figure
\ref{fig_nmr_ours}.  For comparison, we give the first two PLS loadings in
Figure \ref{fig_nmr_others} for PLS and Sparse PLS
\citep{chun_sparse_pls_2010}. The PLS loadings are noisy, and
the sample PLS components for PLS and Sparse PLS are difficult to
interpret as the loadings are both positive and negative.  By 
constraining the PLS loadings to be non-negative, the chemical shifts
the metabolites indicative of each neural cell type are 
readily apparent with the Sparse Non-negative GPLS loadings.
Additionally as shown in the sample PLS heatmaps, the neural cell types are
well differentiated.  For example, chemical resonances at 1.30ppms ad
3.23ppms characterize Glia (Astrocytes and Ogliodendrocytes) and
Neurons (PLS 1),
resonances at 1.19ppms and
3.66ppms characterize Microglia (PLS 2), resonances at 3.23ppms
2.65ppms characterize Astrocytes (PLS 3), resonances at 1.30ppms,
3.02ppms, and 3.55ppms characterize Ogliodendrocytes (PLS 4), and
resonances at 1.28ppms and 3.23ppms characterize Neural stem cells.
Note that some of these metabolites were identified by some of the
same authors using PCA methods in \citet{manganas_2007,
  allen_snn_gpca_2011}.  Using our flexible PLS 
approach for supervised dimension reduction, 
however, gives a much clear metabolic signature of each neural cell
type.  Overall, this case study on NMR spectroscopy data has revealed
the many strengths of our method as well as identified possible
metabolite biomarkers for further biological investigation.

\section{Discussion}
\label{section_dis}

We have presented a framework for regularizing partial least squares
with convex and order one penalties.  Additionally, we have shown how
this approach can be extend for structured data via Generalized PLS
and RPLS and extended to incorporate non-negative PLS or RPLS
loadings.  Our approaches directly solve penalized relaxations of the
SIMPLS optimization criterion.  These in turn, have many advantages
including computational efficiency, flexible modeling, easy
interpretation and visualization, better feature selection, and
improved predictive 
accuracy as demonstrated in our simulations and case study on NMR
spectroscopy.

There are many future areas of research related to our methodology.
While we have briefly discussed the use of our methods for general
regression or classification procedures, specific investigation of the
RPLS factors as predictors in the generalized linear model framework
\citep{marx_pls_glm_1996, chung_spls_class_2010}, the survival
analysis framework \citep{nguyen_sruv_2002}, and others are 
needed.  Additionally, following the close connection of PLS for
classification with the classes coded as dummy variables to Fisher's
discriminant analysis \citep{barker_pls_2003}, our RPLS approach may
give an 
alternative strategy for regularized linear discriminant analysis.
Further development of our novel extensions for GPLS and Non-negative
PLS is also needed.  Finally, \citet{nadler_2005} and
\citet{chun_sparse_pls_2010} have shown 
asymptotic inconsistency of PLS regression methods when the number of
variables is permitted to grow faster than the sample size.  For
related PCA methods, a few have shown consistency of Sparse PCA in
these settings \citep{johnstone_jasa_2009, amini_2009}.  Proving
consistent recovery of the RPLS 
loadings and the corresponding regression or classification coefficients is an
open area of future research.

Finally, we have demonstrated the utility of our methods through a
case study on NMR spectroscopy data, but there are many other
potential applications of our technology.  These include 
chemometrics, proteomics, metabolomics, 
high-throughput 
genomics, imaging, hyperspectral imaging and neuroimaging.  Overall,
we have presented a flexible and 
powerful tool for supervised dimension reduction of high-dimensional
data with many advantages and potential areas of future research and
application.  
An \verb|R| package and a \verb|Matlab| toolbox named \verb|RPLS| that
implements our methods will be made publicly available.


\section{Acknowledgments}

The authors would like to thank Frederick Campbell and Han Yu for
assistance with 
the software development for this paper.  C. Peterson acknowledges
support from the Keck Center of the Gulf Coast Consortia, on the NLM
Training Program in Biomedical Informatics, National Library of
Medicine (NLM) T15LM007093; M. Vannucci and M. Maleti{\'c}-Savati{\'c}
are partially supported by the Collaborative Research Fund from the
Virgina and L. E. Simmons Family Foundation.

\appendix

\section{Proofs}

\begin{proof}[Proof of Proposition \ref{prop_rpls_sol}]
{\footnotesize
The proof of this result follows from an argument in
\citet{allen_gmd_2011}, but we outline this here for completion.  
The updates for $\uvec$ are straightforward.  We show that the
sub-gradient equations of the penalized regression problem,
$\frac{1}{2} || \M \uvec - \vvec || + \lambda P(\vvec)$, for
$\vvec^{*}$ as defined in the stated result are equivalent to the KKT
conditions of \eqref{sing_fac_rpls}.   The sub-gradient equation of the
latter is, $\M \uvec - \lambda \nabla P( \vvec^{*}) - 2 \gamma^{*}
\vvec^{*} = 0$, where $\nabla P()$ is the sub-gradient of $P()$ and
$\gamma^{*}$ is the Lagrange multiplier for the inequality constraint
with complementary slackness condition, $\gamma^{*} ( (\vvec^{*})^{T}
\vvec^{*} - 1) = 0$.  The sub-gradient of the penalized regression
problem is $\M \uvec - \hat{\vvec} - \lambda \nabla P ( \hat{\vvec} )
= 0$.  Now, since $P()$ is order one, we this sub-gradient is
equivalent to $\M \uvec - \frac{1}{c} \tilde{\vvec} - \lambda \nabla P
( \tilde{\vvec} )$ for any $c > 0$ and for $\tilde{\vvec} =
c \hat{\vvec}$.  Then, taking $c = 1 / || \hat{\vvec} ||_{2} = 1/2
\gamma^{*}$ for any $\hat{\vvec} \neq 0$, we see that both the
complimentary slackness condition is 
satisfied and the sub-gradients are equivalent.  It is easy to verify
that the pair $(0,0)$ also satisfy the KKT conditions of
\eqref{sing_fac_rpls}. 
}\end{proof}

\begin{proof}[Proof of Corollary \ref{cor_rpls_sol}]
{\footnotesize
The proof of this fact follows in a straightforward manner from that
of Proposition \ref{prop_rpls_sol} as the only feasible solution for
$\uvec$ is $\uvec^{*} = 1$.  We are then left with a concave
optimization problem, $\maximize_{\vvec} \ \vvec^{T} \M - \lambda
P(\vvec) \ \textrm{subject to} \ \vvec^{T} \vvec \leq 1$.  From the
proof of Proposition \ref{prop_rpls_sol}, we have that this
optimization problem is equivalent to the desired result.  Since we
are left with a concave problem, the global optimum is achieved.
}\end{proof}

\begin{proof}[Proof of Proposition \ref{prop_g_rpls}]
{\footnotesize
First, define $\tilde{\Q}$ to be a matrix square root of $\Q$ as in
\citet{allen_gmd_2011}.  In this paper, they showed that Generalized
PCA was equivalent to PCA on the matrix $\tilde{\X} = \X \tilde{\Q}$
for projected factors $\V = \tilde{\Q}^{\dagger} \tilde{\V}$.  In
other words, if $\tilde{\X} = \tilde{\U} \tilde{\D} \tilde{\V}$ is the
singular value decomposition, then the GPCA solution, $\V$ can be
defined accordingly.  Here, we will prove that the multi-factor RPLS
problem for $\tilde{\X}$ and $\tilde{\vvec}_{k}$
is equivalent to the stated Generalized RPLS problem \eqref{g_rpls}
for $\lambda = 0$.  The constraint regions are trivially equivalent so
we must show that $\tilde{\vvec}_{k}^{T} \tilde{\Pmat}_{k-1}
\tilde{\M} = \vvec^{T} \Q \Pmat_{k-1} \Q \M$.  The PLS factors,
$\tilde{\zvec}_{k} = \tilde{\X} \tilde{\vvec}_{k} = \X \Q \vvec_{k} =
\zvec_{k}$, are equivalent.  Ignoring the normalizing term in the
denominator, the columns of the projection weighting matrix are
$\tilde{\R}_{k} = \tilde{\X} ^{T} \tilde{\zvec}_{k} = \tilde{\Q}^{T}
\X \zvec_{k} = \tilde{\Q}^{T} \R_{k}$.  Thus, the $ij^{th}$ element of
$\tilde{\R}^{T} \tilde{\R} = \zvec_{i}^{T} \X \tilde{\Q}
\tilde{\Q}^{T} \X^{T} \zvec_{j} = \R_{i}^{T} \Q \R_{j}$ as stated.
Putting these together, we have $\tilde{\vvec}_{k}^{T} \tilde{\Pmat}_{k-1}
\tilde{\M} = \vvec_{k}^{T} \tilde{\Q} ( \mathbf{I} - \tilde{\R}_{k-1}
( \R_{k-1}^{T} \Q \R_{k-1} )^{-1} \tilde{\R}_{k-1}^{T} )
\tilde{\Q}^{T} \X^{T} \Y$ which simplifies to the desired result.  

Following this, the proof of the first part is a straightforward
extension of Theorem 1 and Proposition 1 in \citet{allen_gmd_2011}.
The proof for the second part follows from combining the arguments in
Proposition \ref{prop_rpls_sol} and those in the proof of Theorem 2 in
\citet{allen_gmd_2011}.  
}\end{proof}

\singlespacing
{\small
\bibliographystyle{Chicago}
\bibliography{pls,nmr,tensors}
}

\end{document}